\newtheorem{theorem}{{\bf Theorem}}
\newtheorem{remark}{{\bf Remark}}
\begin{document}

%

%

\twocolumn[

\author{
	Khalil Elkhalil$^1$, Ali Hasan$^2$, Jie Ding$^3$, Sina Farsiu$^{1,2}$, Vahid Tarokh$^1$ \\
	$^1$Department of Electrical and Computer Engineering,
	Duke University,
	Durham, NC 27701\\
	$^2$Department of Biomedical Engineering,
	Duke University,
	Durham, NC 27701\\
		\texttt{\{khalil.elkhalil,ali.hasan,sina.farsiu,vahid.tarokh\}@duke.edu} \\
	$^3$School of Statistics,
University of Minnesota 
Minneapolis, MN 55455  \\
  dingj@umn.edu \\
}
\aistatstitle{Fisher Auto-Encoders}

\aistatsauthor{ Khalil Elkhalil$^1$ \And Ali Hasan$^{1}$ \And Jie Ding$^2$ \And Sina Farsiu$^1$ \And Vahid Tarokh$^1$}

\aistatsaddress{$^1$Duke University, USA \And  $^2$University of Minnesota, USA} ]

\begin{abstract}
It has been conjectured that the Fisher divergence is more robust to model uncertainty than the conventional Kullback-Leibler (KL) divergence. This motivates the design of a new class of robust generative auto-encoders (AE) referred to as Fisher auto-encoders. Our approach is to design Fisher AEs by minimizing the Fisher divergence between the intractable joint distribution of observed data and latent variables, with that of the postulated/modeled joint distribution.  In contrast to KL-based variational AEs (VAEs), the Fisher AE can exactly quantify the distance between the true and the model-based posterior distributions. Qualitative and quantitative results are provided on both MNIST and celebA datasets demonstrating the competitive performance of Fisher AEs in terms of robustness compared to other AEs such as VAEs and Wasserstein AEs.  

\end{abstract}
\section{Introduction}
In recent years, generative modeling became a very active research area with impressive achievements. The most popular generative schemes are often given by variational auto-encoders (VAEs) \cite{KingmaVAE}, generative adversarial networks (GANs) \cite{GAN} and their variants. VAEs rely on the maximum likelihood principle to learn the underlying data generating distribution by considering a parametric model. Due to the intractability of the parametric model, VAEs employ approximate inference by considering an approximate posterior to get a variational bound on the log-likelihood of the model distribution. Despite its elegance, this approach has the drawback of generating low-quality samples due to the fact that the approximate posterior could be quite different from the true one. On the other hand, GANs have proven to be more impressive when it comes to the visual quality of the generated samples, while the training often involves nontrivial fine-tuning  and is unstable. In addition to difficult training, GANs also suffer from ``mode collapse'' where the generated samples are not diverse enough to capture the diversity and variability in the true data distribution \cite{GAN}.

In this work, we propose a new class of robust auto-encoders that also serve as a generative model.
The main idea is to develop a `score' function~\cite{hyvarinen,scoring} of the observed data and postulated model, so that its minimization problem is equivalent to  minimizing the Fisher divergence~\cite{jie_nips19} between the underlying data generating distribution and the postulated/modeled distribution. By doing this, we are able to leverage the potential advantages of Fisher divergence in terms of computation and robustness. 
In the context of parameter estimation, minimizing the Fisher divergence has led to the Hyv\"arinen score~\cite{hyvarinen}, which   serves as a potential surrogate for the logarithmic score. 
The main advantage of the Hyv\"arinen score over logarithmic sore is its significant computational advantage for estimating probability distributions that are known only up to a multiplicative constant, e.g. those in mixture models and complex time series models~\cite{hyvarinen,scoring,jie_nips19,jieSMC}. 
Our work will extend the use of Fisher divergence and Hyv\"arinen score in the context of variational auto-encoders.

Similar to the logarithmic score, the Hyv\"arinen score is also intractable to compute due to the intractable integration over the latent variables. One way to mitigate this difficulty is to bound the Hyv\"arinen score and obtain a variational bound to optimize instead. However, unlike the logarithmic score, this strategy seems to be very complicated and a variational bound seems to be out of reach. Alternatively, it turns out that the variational bound in VAEs can be recovered by minimizing the KL divergence between the joint distribution over the data and latent variable and the modeled joint distribution which can be easily calculated as the product of the prior and the decoder distribution \cite{VAE_tutorial}. Following the same principle, we propose to minimize the Fisher divergence between the two joint distributions over the model parameters. This minimization results in a loss function that shares similar properties as regular VAEs but more powerful from an inference point of view. 

It turns out that our developed loss function is the sum of three terms: the first one is the tractable Fisher divergence between the approximate and the model posteriors, the second is similar to the reconstruction loss in VAEs obtained by evaluating the Hyv\"arinen score on the decoder distribution, and the last term can be seen as a stability measure that promotes the invariance property in feature extraction in the encoder. Therefore, the new loss function is different from the regular variational bound in regular VAEs in the following aspects: 1) it considers the minimization of the distance between the approximate and the model posteriors which turns out to be difficult when considering the KL divergence due to the intractable normalization constant in the model posterior, 2) it allows to produce robust features by considering a stability measure of the approximate posterior. Experimental results on MNIST \cite{mnist} and CelebA \cite{celebA} datasets validate these aspects and demonstrate the potential of the proposed Fisher AE as compared to some existing schemes such as VAEs and Wasserstein AEs. Moreover, thanks to the stability measure in the Fisher loss function,  the encoder is proved to have more stable and robust reconstruction when the data is perturbed by noise as compared to other schemes playing a similar role as denoising auto-encoders \cite{DAE}.

\textbf{Related works}. Previous works on learning variational auto-encoders initiated by the work of \cite{KingmaVAE} are fundamentally maximum likelihood methods that learn the underlying data distribution by the proxy of an evidence lower bound (ELBO) on the log-likelihood. The accuracy of such bound is mainly related to the KL divergence between the true and the postulated posteriors. This has been the focus of many works trying to minimize the inference gap resulting from the postulated posterior. For instance, normalizing flows \cite{nf} employs rich posterior approximations using tractable and flexible transformations on initial densities. In the same category, the work in \cite{stein_ae} provides an efficient way of directly sampling from the true posterior using the Stein Variational Gradient Descent (SVGD) method. 
On the other hand, Wasserstein auto-encoders (WAEs) proposed in \cite{wae} follow a different path by looking at the Wasserstein distance between the true and the model distributions. Relying on the Monge-Kantorovich formulation, the Wasserstein distance naturally emerge as an optimization over an encoder-decoder structure with a reasonable geometry over the latent manifold. 

\textbf{Main contributions}. 
First, we develop a new type of AEs that is based on minimizing the Fisher divergence between the underlying data/latent joint distribution and the postulated model joint distribution.
Our derived loss function may be  decomposed as \textit{divergence between posteriors} + \textit{reconstruction loss} + \textit{stability measure}.
Second, our derived method is  conceptually appealing as it is reminiscent of the classical evidence lower bound (ELBO) derived from Kullback-Leibler (KL) divergence.
Third,   we affirmatively address the conjecture made in some earlier work that Fisher divergence can be more robust than KL divergence in modeling complex nonlinear models~\cite{jie_nips19,siwei} in the context of VAEs. Our results indicate that Fisher divergence may serve as a competitive learning machinery for  challenging deep learning tasks.

\textbf{Outline}. 
In Section \ref{background}, we provide a brief overview on VAEs and some theoretical concepts related to the Fisher divergence and the  Hyv\"arinen score. In Section \ref{proposed}, we provide the technical details related to the proposed Fisher auto-encoder. Then, in Section \ref{experiments} we give both qualitative and quantitative results regarding the performance of the proposed Fisher AE. Finally, we provide some concluding remarks in Section \ref{conclusion}.  
\section{Background on VAEs and Fisher divergence}
\label{background}
\subsection{Variational auto-encoders}
By considering a probabilistic model of the data observations $\mathbf{x} \in \mathbb{R}^D$ given by $p_{\theta}(\mathbf{x})$, the goal of variational inference is to optimize the model parameters $\theta$ to match the true unknown data distribution $p_{\star}(\mathbf{x})$ in some sense. One way to match the true data distribution is to minimize the Kullback-Leibler (KL) divergence as follows:
\begin{equation}
\label{KL_true _dist}
\begin{split}
    \theta^{\star} & = \arg \min_{\theta} \mathbb{D}_{\text{KL}} \left[ p_{\star}|| p_{\theta}\right] \\
    & = \arg \min_{\theta} \mathbb{E}_{p_{\star}(\mathbf{x})}  - \log p_{\theta}(\mathbf{x}) \\
    & = \arg \min_{\theta} \mathbb{E}_{p_{\star}(\mathbf{x})} - \log \int p(\mathbf{z}) p_{\theta}(\mathbf{x}|\mathbf{z}) d\mathbf{z},
\end{split}
\end{equation}
where $\mathbf{z} \in \mathbb{R}^d$ are latent variables with prior distribution $p(\mathbf{z})$ and $p_{\theta}(\mathbf{x}|\mathbf{z})$ is a likelihood function corresponding to the decoder modeled by the parameters $\theta$ using a neural network. Unfortunately, the intergration over the latent variables $\mathbf{z}$ in \eqref{KL_true _dist} is usually intractable and an upper bound on the negative marginal log-likelihood is often optimized instead. By introducing an alternative posterior over the latent variables given by $q_{\phi}(\mathbf{z}|\mathbf{x})$ and by direct application of the Jensen's inequality, we have 
\begin{equation}
\label{elbo}
\begin{split}
- \log p_{\theta}(\mathbf{x})  & = -   \log \int \frac{q_{\phi}(\mathbf{z}|\mathbf{x})}{q_{\phi}(\mathbf{z}|\mathbf{x})}p(\mathbf{z}) p_{\theta}(\mathbf{x}|\mathbf{z}) d\mathbf{z} \\
& \leq \mathbb{D}_{\text{KL}} \left[ q_{\phi}(\mathbf{z}|\mathbf{x}) || p(\mathbf{z}) \right] - \mathbb{E}_{q_{\phi}(\mathbf{z}|\mathbf{x})} \log p_{\theta}(\mathbf{x}|\mathbf{z}) \\ 
& = \mathcal{L}_{\text{VAE}} \left(\mathbf{x}; \phi, \theta\right),
\end{split}
\end{equation}
where $q_{\phi}(\mathbf{z}|\mathbf{x})$ is an approximate posterior corresponding to the encoder parameterized by $\phi$. 
The bound in \eqref{elbo} is often called the evidence lower bound (ELBO) (w.r.t the log-likelihood) and it is optimized w.r.t both model parameters $\phi$ and $\theta$:
\begin{equation}
\label{elbo_optim}
    \phi^*, \theta^* = \arg \min_{\phi, \theta} \mathbb{E}_{p_{\star}(\mathbf{x})}  \mathcal{L}_{\text{VAE}} \left(\mathbf{x}; \phi, \theta\right).
\end{equation}
The common practice is to consider a Gaussian model for the posterior $q_{\phi}(\mathbf{z}|\mathbf{x})$, i.e., $q_{\phi}(\mathbf{z}|\mathbf{x}) = \mathcal{N}\left(\mathbf{z}| \mu(\mathbf{x}), \sigma(\mathbf{x})^2 \right)$ where $\mu(\mathbf{x})$ and $\sigma(\mathbf{x})^2$ are the output of a neural network taking as input the data sample $\mathbf{x}$ and parameterized by $\phi$. This allows to reparametrize $\mathbf{z}$ as $\mathbf{z} = \mu(\mathbf{x}) + \sigma(\mathbf{x}) \odot \epsilon$, where $\odot$ denotes the point-wise multiplication and $\epsilon \sim \mathcal{N}\left(0, \mathbf{I} \right)$ which permits to efficiently solve \eqref{elbo_optim} using stochastic gradient variational Bayes (SGVB) as in \cite{KingmaVAE}.

\subsection{Fisher divergence and the Hyv\"arinen score}
A standard procedure in data fitting and density estimation is to select from a parameter space $\Theta$, the probability distribution $p_{\theta}$, $\theta \in \Theta$ that minimizes a certain divergence $\mathbb{D}\left[. || .\right]$ with respect to the unknown true data distribution $p_{\star}$. For a certain class of divergences, expanding the divergence w.r.t the true probability distribution yields: $\mathbb{D} \left[ p_{\star} || p_{\theta}\right]  = c_{\star} + \mathbb{E}_{p_{\star}(\mathbf{x})} s\left[ p_{\theta}\left(\mathbf{x}\right)\right]$, where $c_{\star}$ is a constant that depends only on the data and $s\left[ .\right]: \mathbb{R}^+ \to \mathbb{R}$ is a score function associated to $\mathbb{D}[. || .]$. Clearly, the smaller the score $s\left[ p_{\theta}\left(\mathbf{x}\right)\right]$, the better the data point $\mathbf{x} \sim p_{\star} $ fits the model $p_{\theta}$. In practice, given a set of observations $\{\mathbf{x}_i\}_{i=1, \cdots, N} \sim_{i.i.d} p_{\star}$, one would minimize the sample average $N^{-1} \sum_{i=1}^N s\left[ p_{\theta}\left(\mathbf{x}_i\right)\right]$ over $\theta \in \Theta$.
The most popular example of these scoring functions \cite{scoring} is the logarithmic score given by $- \log p_{\theta}\left(\mathbf{x}\right)$ which is obtained by minimizing the Kullback-Leibler (KL) divergence, i.e. $\mathbb{D} = \mathbb{D}_{\text{KL}}$. In this case, the procedure of minimizing the score function is widely known as maximum likelihood (ML) estimation and has been extensively applied in statistics and machine learning. Popular instances of ML estimation include logistic regression when minimizing the cross-entropy loss w.r.t a Bernoulli model of the data and regression when minimizing the squared loss in the presence of a Gaussian model of the data \cite{bishop}. In the context of variational inference, the logarithmic score is fundamental in the construction of variational autoencoders \cite{KingmaVAE} as we showed in the previous section. 

Recently, the Hyv\"arinen score \cite{hyvarinen, jie_nips19, pmlr-v48-liub16, siwei} that we denote by $s_{\nabla}[.]$ has been proposed as an alternative to the logarithmic score. 
It turns out that the Hyv\"arinen score can be obtained by minimizing the Fisher divergence defined as
\begin{equation}
    \label{fisher_div}
    \mathbb{D}_{\nabla} \left[ p_{\star} || p_{\theta}\right] = \mathbb{E}_{p_{\star}(\mathbf{x})} \frac{1}{2} \left \| \nabla_{\mathbf{x}} \log p_{\star}(\mathbf{x}) - \nabla_{\mathbf{x}} \log p_{\theta} (\mathbf{x})\right \|^2,
\end{equation}
where $\nabla_{\mathbf{x}}$ denotes the gradient w.r.t $\mathbf{x}$. 
Assuming the same regularity conditions as in Proposition 1 \cite{jie_nips19}, we have 
\begin{equation}
\label{fisher_div_score}
        \mathbb{D}_{\nabla} \left[ p_{\star} || p_{\theta}\right] = \mathbb{E}_{p_{\star}(\mathbf{x})} \frac{1}{2} \left \| \nabla_{\mathbf{x}} \log p_{\star}(\mathbf{x}) \right \|^2 + s_{\nabla} \left[ p_{\theta} (\mathbf{x})\right],
\end{equation}
with
\begin{equation}
    \label{H_score}
    s_{\nabla} \left[ p(\mathbf{x})\right] = \frac{1}{2} \left \|\nabla_{\mathbf{x}} \log p(\mathbf{x}) \right \|^2 + \Delta_{\mathbf{x}} \log p(\mathbf{x}),
\end{equation}
for some probability density function $p(\mathbf{x})$ and $\Delta_{\mathbf{x}} = \sum_{j=1}^D \frac{\partial^2}{\partial x_j^2} f(\mathbf{x})$ denotes the Laplacian of some function $f$ w.r.t $\mathbf{x}$. The potential of both the Fisher divergence and the Hyv\"arinen score is their ability to deal with probability distributions that are known up to some multiplicative constant. This interesting property allows to consider larger class of unormalized distributions and therefore better fits the data. In the next section, we provide a detailed description of how we can extend the use of Fisher divergence and Hyv\"arinen score in the context of variational auto-encoders. 
\section{Proposed Fisher Auto-Encoder}
\label{proposed}
Recall from \eqref{elbo} that instead of minimizing the logarithmic score $-\log p_{\theta} (\mathbf{x})$, we instead upper bound the score and minimize $\mathcal{L}_{\text{VAE}} \left(\mathbf{x}; \phi, \theta\right)$. Similarly, one would look for an upper bound to the Hyv\"arinen score $s_{\nabla}\left[ p_{\theta}(\mathbf{x})\right]$ and minimize it w.r.t model parameters $\phi$ and $\theta$. However, this is quite non-trivial as opposed to the logarithmic score in \eqref{elbo}. Fortunately, the upper bound in \eqref{elbo} can be recovered by minimizing the KL divergence between the following two joint distributions: $q_{\star, \phi}(\mathbf{x}, \mathbf{z}) = p_{\star}(\mathbf{x}) q_{\phi}(\mathbf{z} | \mathbf{x})$ and $p_{\eta, \theta}(\mathbf{x}, \mathbf{z}) = p_{\eta}(\mathbf{z}) p_{\theta}(\mathbf{x} | \mathbf{z})$ where $q_{\phi}(\mathbf{z} | \mathbf{x})$, $p_{\eta}(\mathbf{z})$ and $p_{\theta}(\mathbf{x} | \mathbf{z})$ are respectively the variational posterior, the prior and the decoder with parameters $\phi$, $\eta$ and $\theta$. 
\begin{equation}
\begin{split}
        & \phi^{\star}_{\text{VAE}}, \eta^{\star}_{\text{VAE}}, \theta^{\star}_{\text{VAE}} \\ & = \arg \min_{\phi, \eta, \theta} \mathbb{D}_{\text{KL}} \left[q_{\star, \phi}(\mathbf{x}, \mathbf{z}) || p_{\eta, \theta}(\mathbf{x}, \mathbf{z}) \right] \\ 
    & = \arg \min_{\phi, \eta, \theta} \mathbb{E}_{p_{\star}(\mathbf{x})} \mathbb{E}_{q_{\phi}(\mathbf{z}|\mathbf{x})} \left[ \log p_{\star}(\mathbf{x}) + \log \frac{q_{\phi}(\mathbf{z}|\mathbf{x})}{p_{\eta}(\mathbf{z}) p_\theta (\mathbf{x}|\mathbf{z})} \right] \\ 
    & = \arg \min_{\phi, \eta, \theta} \mathbb{E}_{p_{\star}(\mathbf{x})} \{ \mathbb{D}_{\text{KL}} \left[q_{\phi}(\mathbf{z}|\mathbf{x}) || p_{\eta}(\mathbf{z}) \right]  \\  & \quad \quad \quad \quad \quad \quad \quad \quad - \mathbb{E}_{q_{\phi}(\mathbf{z}|\mathbf{x}) } \log p_{\theta}(\mathbf{x}|\mathbf{z}) \}\\ 
    & = \arg \min_{\phi, \eta, \theta} \mathbb{E}_{p_{\star}(\mathbf{x})} \mathcal{L}_{\text{VAE}} \left(\mathbf{x}; \phi, \eta, \theta\right).
\end{split}
\end{equation}
Following the same line of thought, we propose to minimize the Fisher divergence between $q_{\star, \phi}(\mathbf{x}, \mathbf{z})$ and $p_{\eta, \theta}(\mathbf{x}, \mathbf{z})$ as follows:
\begin{equation}
\label{fisher_div_joint}
    \begin{split}
        & \phi^{\star}, \eta^{\star}, \theta^{\star}  \\ & = \arg \min_{\phi, \eta, \theta} \mathbb{D}_{\nabla}    \left[q_{\star, \phi}(\mathbf{x},\mathbf{z}) || p_{\eta, \theta}(\mathbf{x}, \mathbf{z}) \right] \\ 
 & = \arg \min_{\phi, \eta, \theta}
 \\ &  \mathbb{E}_{q_{\star, \phi} (\mathbf{x}, \mathbf{z})}  \frac{1}{2} \left \| \nabla_{\mathbf{x}, \mathbf{z}} \log q_{\star, \phi}(\mathbf{x},\mathbf{z}) - \nabla_{\mathbf{x}, \mathbf{z}} \log p_{\eta, \theta}(\mathbf{x},\mathbf{z})\right\|^2,
    \end{split}
\end{equation}
where $\nabla_{\mathbf{x}, \mathbf{z}}$ denotes the gradient w.r.t the augmented variable $\{\mathbf{x}, \mathbf{z}\}$. The following theorem provides a simplified expression of the Fisher AE loss by expanding and simplifying the Fisher divergence in \eqref{fisher_div_joint}. 
\begin{theorem}
\label{theorem_fisher_loss}
The minimization in \eqref{fisher_div_joint} is equivalent to the following minimization problem:
\begin{equation}
    \label{fisher_minimization}
    \begin{split}
    \phi^{\star}, \eta^{\star}, \theta^{\star} & = \arg \min_{\phi, \eta, \theta} \mathbb{D}_{\nabla}    \left[q_{\star, \phi}(\mathbf{x},\mathbf{z}) || p_{\eta, \theta}(\mathbf{x}, \mathbf{z}) \right] \\ 
 & =   \arg \min_{\phi, \eta, \theta} \mathbb{E}_{p_{\star}(\mathbf{x})} \mathcal{L}_{\text{F-AE}} \left(\mathbf{x}; \phi, \eta, \theta\right),     
    \end{split}
\end{equation}
where 
\begin{equation}
\label{fisher_loss}
\begin{split}
        & \mathcal{L}_{\text{F-AE}} \left(\mathbf{x}; \phi, \eta, \theta\right) \\ 
        & =  \underbrace{\mathbb{D}_{\nabla} \left[
 q_{\phi}(\mathbf{z}|\mathbf{x}) || p_{\eta, \theta}(\mathbf{z}|\mathbf{x})\right]}_{\textcircled{1}} \\ & +  \mathbb{E}_{q_{\phi}(\mathbf{z}|\mathbf{x})} \biggl[ \underbrace{s_{\nabla}\left[ p_{\theta}(\mathbf{x}|\mathbf{z}) \right]}_{\textcircled{2}} + \underbrace{\frac{1}{2} \left \|\nabla_{\mathbf{x}} \log q_{\phi}(\mathbf{z}|\mathbf{x}) \right \|^2}_{\textcircled{3}}\biggr] .
 \end{split}
\end{equation}
\end{theorem}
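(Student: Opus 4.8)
The plan is to expand the joint Fisher divergence in \eqref{fisher_div_joint} by splitting the augmented gradient $\nabla_{\mathbf{x},\mathbf{z}}$ into its $\mathbf{x}$- and $\mathbf{z}$-blocks, so that $\|\nabla_{\mathbf{x},\mathbf{z}}(\cdot)\|^2 = \|\nabla_{\mathbf{x}}(\cdot)\|^2 + \|\nabla_{\mathbf{z}}(\cdot)\|^2$, and to simplify each block using elementary identities together with integration by parts in $\mathbf{x}$. Throughout I assume the regularity conditions of Proposition~1 in \cite{jie_nips19} (all densities smooth with enough decay that boundary terms vanish, and differentiation under the integral sign is permitted); in particular $c_\star := \frac12 \mathbb{E}_{p_\star(\mathbf{x})} \|\nabla_{\mathbf{x}}\log p_\star(\mathbf{x})\|^2 < \infty$ is a constant independent of $\phi,\eta,\theta$. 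The target is $\mathbb{D}_{\nabla}[q_{\star,\phi}\,\|\,p_{\eta,\theta}] = c_\star + \mathbb{E}_{p_\star(\mathbf{x})}\mathcal{L}_{\text{F-AE}}(\mathbf{x};\phi,\eta,\theta)$, from which \eqref{fisher_minimization} follows at once.

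For the $\mathbf{z}$-block I would write $\log q_{\star,\phi}(\mathbf{x},\mathbf{z}) = \log p_\star(\mathbf{x}) + \log q_\phi(\mathbf{z}|\mathbf{x})$ and use Bayes' rule on the model joint in the form $p_{\eta,\theta}(\mathbf{x},\mathbf{z}) = p_{\eta,\theta}(\mathbf{x})\, p_{\eta,\theta}(\mathbf{z}|\mathbf{x})$ with model evidence $p_{\eta,\theta}(\mathbf{x}) = \int p_\eta(\mathbf{z})p_\theta(\mathbf{x}|\mathbf{z})\,d\mathbf{z}$. Differentiating in $\mathbf{z}$ kills the $\mathbf{z}$-free terms $\log p_\star(\mathbf{x})$ and $\log p_{\eta,\theta}(\mathbf{x})$, so the $\mathbf{z}$-block collapses to $\mathbb{E}_{p_\star(\mathbf{x})}\mathbb{E}_{q_\phi(\mathbf{z}|\mathbf{x})}\frac12\|\nabla_{\mathbf{z}}\log q_\phi(\mathbf{z}|\mathbf{x}) - \nabla_{\mathbf{z}}\log p_{\eta,\theta}(\mathbf{z}|\mathbf{x})\|^2 = \mathbb{E}_{p_\star(\mathbf{x})}\,\textcircled{1}$. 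This step is exactly where the Fisher divergence earns its keep: the model posterior $p_{\eta,\theta}(\mathbf{z}|\mathbf{x})$ is known only up to the intractable normalizer $p_{\eta,\theta}(\mathbf{x})$, which does not appear in its $\mathbf{z}$-score.

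For the $\mathbf{x}$-block I keep the other factorization $p_{\eta,\theta}(\mathbf{x},\mathbf{z}) = p_\eta(\mathbf{z})p_\theta(\mathbf{x}|\mathbf{z})$, so that $\nabla_{\mathbf{x}}\log p_{\eta,\theta}(\mathbf{x},\mathbf{z}) = \nabla_{\mathbf{x}}\log p_\theta(\mathbf{x}|\mathbf{z})$ since the prior is $\mathbf{x}$-free. Abbreviating $a = \nabla_{\mathbf{x}}\log p_\star(\mathbf{x})$, $b = \nabla_{\mathbf{x}}\log q_\phi(\mathbf{z}|\mathbf{x})$, $c = \nabla_{\mathbf{x}}\log p_\theta(\mathbf{x}|\mathbf{z})$, the $\mathbf{x}$-block is $\mathbb{E}_{q_{\star,\phi}}\frac12\|a+b-c\|^2$; I expand the square into its six terms. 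The diagonal ones give $c_\star$ from $\frac12\|a\|^2$, the stability term $\mathbb{E}_{q_\phi}\,\textcircled{3}$ from $\frac12\|b\|^2$, and the squared-gradient half of $\textcircled{2}$ from $\frac12\|c\|^2$. For the cross term $-\langle a,c\rangle$ I integrate by parts in $\mathbf{x}$, moving the derivative off $p_\star$ onto $q_\phi(\mathbf{z}|\mathbf{x})\nabla_{\mathbf{x}}\log p_\theta(\mathbf{x}|\mathbf{z})$ and using $p_\star\nabla_{\mathbf{x}}\log p_\star = \nabla_{\mathbf{x}}p_\star$, $q_\phi\nabla_{\mathbf{x}}\log q_\phi = \nabla_{\mathbf{x}}q_\phi$; this yields $\mathbb{E}_{q_{\star,\phi}}[-\langle a,c\rangle] = \mathbb{E}_{q_{\star,\phi}}[\langle b,c\rangle + \Delta_{\mathbf{x}}\log p_\theta(\mathbf{x}|\mathbf{z})]$, so $-\langle a,c\rangle-\langle b,c\rangle$ contributes $\mathbb{E}_{q_{\star,\phi}}\Delta_{\mathbf{x}}\log p_\theta(\mathbf{x}|\mathbf{z})$, which with the $\frac12\|c\|^2$ term reconstitutes the Hyv\"arinen score $s_\nabla[p_\theta(\mathbf{x}|\mathbf{z})] = \textcircled{2}$ of \eqref{H_score}. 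The last cross term $\langle a,b\rangle$ is the delicate one: the analogous integration by parts gives $\mathbb{E}_{q_{\star,\phi}}[\langle a,b\rangle] = -\mathbb{E}_{q_{\star,\phi}}[\|\nabla_{\mathbf{x}}\log q_\phi(\mathbf{z}|\mathbf{x})\|^2 + \Delta_{\mathbf{x}}\log q_\phi(\mathbf{z}|\mathbf{x})]$, and I eliminate it by differentiating the encoder normalization $\int q_\phi(\mathbf{z}|\mathbf{x})\,d\mathbf{z}=1$ twice in $\mathbf{x}$: this gives $\int \Delta_{\mathbf{x}} q_\phi(\mathbf{z}|\mathbf{x})\,d\mathbf{z}=0$, i.e.\ $\mathbb{E}_{q_\phi(\mathbf{z}|\mathbf{x})}[\|\nabla_{\mathbf{x}}\log q_\phi(\mathbf{z}|\mathbf{x})\|^2 + \Delta_{\mathbf{x}}\log q_\phi(\mathbf{z}|\mathbf{x})]=0$ for every $\mathbf{x}$, hence $\mathbb{E}_{q_{\star,\phi}}[\langle a,b\rangle]=0$. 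Summing the surviving pieces, the $\mathbf{x}$-block equals $c_\star + \mathbb{E}_{p_\star(\mathbf{x})}\mathbb{E}_{q_\phi(\mathbf{z}|\mathbf{x})}[\textcircled{2}+\textcircled{3}]$, and adding the $\mathbf{z}$-block $\mathbb{E}_{p_\star(\mathbf{x})}\textcircled{1}$ gives the decomposition \eqref{fisher_loss}.

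The main obstacle is the $\langle a,b\rangle$ term. Without the encoder-normalization identity the loss would still carry the intractable data score $\nabla_{\mathbf{x}}\log p_\star$, together with a spurious $\Delta_{\mathbf{x}}\log q_\phi$ contribution; it is precisely the cancellation of these that flips the sign of $\frac12\|\nabla_{\mathbf{x}}\log q_\phi\|^2$ so that it emerges as the ``stability'' term $\textcircled{3}$ rather than its negative. The only other care needed is the bookkeeping of which factorization of $p_{\eta,\theta}$ is used for which gradient block, and checking that each integration by parts is licensed by the stated regularity conditions (no boundary contributions, dominated differentiation under the integral), all of which are inherited directly from Proposition~1 of \cite{jie_nips19}.
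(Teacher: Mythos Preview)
Your proof is correct and follows the same architecture as the paper's: split $\nabla_{\mathbf{x},\mathbf{z}}$ into its $\mathbf{x}$- and $\mathbf{z}$-blocks, recognize the $\mathbf{z}$-block directly as $\textcircled{1}$, expand the $\mathbf{x}$-block, and eliminate the cross terms by integration by parts in $\mathbf{x}$. The one point of departure is the $\langle a,b\rangle$ term. The paper integrates by parts in the opposite direction (moving the derivative off $\nabla_{\mathbf{x}} q_\phi$ onto $p_\star\nabla_{\mathbf{x}}\log p_\star$), obtaining $-\mathbb{E}_{q_{\star,\phi}}[\|\nabla_{\mathbf{x}}\log p_\star\|^2 + \Delta_{\mathbf{x}}\log p_\star]$, which it combines with $\tfrac12\|a\|^2$ into the discarded constant $\mathbb{E}_{p_\star}[-s_\nabla[p_\star(\mathbf{x})]]$; you instead move the derivative off $\nabla_{\mathbf{x}}p_\star$ and invoke the encoder-normalization identity $\int q_\phi(\mathbf{z}|\mathbf{x})\,d\mathbf{z}=1$ (differentiated twice in $\mathbf{x}$) to show the term vanishes pointwise in $\mathbf{x}$. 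The two routes are equivalent---indeed $\mathbb{E}_{p_\star}[-s_\nabla[p_\star]]=c_\star$ by one more integration by parts---but your normalization argument is slightly more direct: it shows that the intractable data score $\nabla_{\mathbf{x}}\log p_\star$ never enters at all, rather than bundling it into a constant. The paper likewise swaps which of $-\langle a,c\rangle$ and $-\langle b,c\rangle$ receives the integration by parts, but that difference is purely cosmetic.
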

\begin{proof}
A proof can be found in the supplementary material. 
\end{proof}
The Fisher AE loss denoted by $\mathcal{L}_{\text{F-AE}}\left(\mathbf{x}; \phi, \eta, \theta\right)$ in \eqref{fisher_loss} is the sum of the following three terms: \textcircled{1} the Fisher divergence between the two posteriors $q_{\phi}(\mathbf{z}|\mathbf{x})$ and $p_{\eta, \theta}(\mathbf{z}|\mathbf{x})$. In traditional VAEs, the KL divergence between these two posteriors is generally intractable since $p_{\eta, \theta}(\mathbf{z}|\mathbf{x}) = \frac{p_{\eta}(\mathbf{z}) p_{\theta}(\mathbf{x} | \mathbf{z})}{p_{\eta, \theta}(\mathbf{x})}$ and $p_{\eta, \theta}(\mathbf{x})$ is hard to compute because $p_{\eta, \theta}(\mathbf{x}) = \int p_{\eta}(\mathbf{z}) p_{\theta}(\mathbf{x} | \mathbf{z}) d\mathbf{z}$. Interestingly, with the Fisher divergence this limitation is alleviated since $ p_{\eta, \theta}(\mathbf{z}|\mathbf{x}) \propto p_{\eta}(\mathbf{z}) p_{\theta}(\mathbf{x} | \mathbf{z}) $ and we only need $\nabla_{\mathbf{z}} \log p_{\eta, \theta}(\mathbf{z}|\mathbf{x}) = \nabla_{\mathbf{z}} \log p_{\eta} (\mathbf{z}) + \nabla_{\mathbf{z}} \log p_{\theta}(\mathbf{x} | \mathbf{z}) $ for computation. The second term given by \textcircled{2} is the Hyv\"arinen score of $p_{\theta}(\mathbf{x} | \mathbf{z})$  which is nothing but a reconstruction loss similar to $- \log p_{\theta} (\mathbf{x} | \mathbf{z})$ in regular VAEs. When $p_{\theta}(\mathbf{x} | \mathbf{z}) \propto e^{ - \frac{1}{2}\left \| \mathbf{x} - f_{\theta}(\mathbf{z}) \right \|^2}$, the reconstruction loss is given by the squared loss\footnote{We omit the constant term coming from the Laplacian $\Delta_{\mathbf{x}} \log p_{\theta}(\mathbf{x} | \mathbf{z})$ since it is irrelevant to the minimization problem in \eqref{fisher_minimization}.}: $\frac{1}{2}\left \| \mathbf{x} - f_{\theta}(\mathbf{z}) \right \|^2$ which is the same as in regular VAEs under the same model, $f_{\theta}(.): \mathbb{R}^d \to \mathbb{R}^D$ is the decoder parametrized by $\theta$. The last term \textcircled{3} is a stability term that permits to produce robust features in the sense that the posterior distribution is robust against small perturbations in the input data. This is similar to contractive auto-encoders which promote the invariance property in feature extraction \cite{contractive_ae}. 
\begin{remark} 
\label{equal_posteriors}
When $q_{\phi}(\mathbf{z}|\mathbf{x}) = p_{\eta, \theta}(\mathbf{z}|\mathbf{x})$, the Fisher AE loss becomes exactly the Hyv\"arinen score of the model  distribution $p_{\eta, \theta}(\mathbf{x})$, i.e. $\mathcal{L}_{\text{F-AE}} \left(\mathbf{x}; \phi, \eta,  \theta\right) = s_{\nabla}\left[ p_{\eta, \theta}(\mathbf{x})\right]$. This is similar to traditional VAEs since we also have $\mathcal{L}_{\text{VAE}} \left(\mathbf{x}; \phi, \eta, \theta\right) = - \log p_{\eta, \theta}(\mathbf{x})$ in this case. 
\end{remark}
\begin{proof}
When $q_{\phi}(\mathbf{z}|\mathbf{x}) = p_{\eta, \theta}(\mathbf{z}|\mathbf{x})$,  $\mathbb{D}_{\nabla}    \left[q_{\star, \phi}(\mathbf{x},\mathbf{z}) || p_{\eta, \theta}(\mathbf{x}, \mathbf{z}) \right] = \mathbb{D}_{\nabla} \left[ p_{\star}(\mathbf{x}) || p_{\eta, \theta} (\mathbf{x})\right]$. The proof is concluded by relying on \eqref{fisher_div_score}.
\end{proof}
Given a data point $\mathbf{x}$, the Fisher AE loss can be estimated using Monte Carlo with $L$ samples from $q_{\phi}(\mathbf{z}|\mathbf{x})$ as follows:
\begin{equation}
\label{MC_FisherLoss}
\begin{split}
 & \mathcal{L}_{\text{F-AE}} \left(\mathbf{x}; \phi, \eta, \theta\right) \\  & \simeq   \mathcal{L}^{(L)}_{\text{F-AE}} \left(\mathbf{x}; \phi, \eta, \theta\right) \\
 & = 
 \frac{1}{2L} \sum_{l=1}^L \Biggl[\|\nabla_{\mathbf{z}} \log q_{\phi}(\mathbf{z}^{(l)}|\mathbf{x}) - \nabla_{\mathbf{z}} \log p_{\eta}(\mathbf{z}^{(l)}) \\ & \quad \quad \quad \quad \quad \quad - \nabla_{\mathbf{z}} \log p_{\theta}(\mathbf{x}| \mathbf{z}^{(l)}) \|^2  \\ 
 & \quad \quad \quad \quad + \left \| \mathbf{x} - f_{\theta}(\mathbf{z}^{(l)}) \right \|^2 + \left \| \nabla_{\mathbf{x}} \log q_{\phi}(\mathbf{z}^{(l)}|\mathbf{x}) \right \|^2 \Biggr],
\end{split}
\end{equation}
where $\mathbf{z}^{(l)} = \mu(\mathbf{x}) + \sigma(\mathbf{x}) \odot \epsilon^{(l)}$, $\epsilon^{(l)} \sim \mathcal{N} \left(0, \mathbf{I} \right)$. Moreover, $\nabla_{\mathbf{z}} \log q_{\phi}(\mathbf{z}^{(l)}|\mathbf{x}) = -\frac{\epsilon^{(l)}}{\sigma(\mathbf{x})} $ and both $\nabla_{\mathbf{z}} \log p_{\theta}(\mathbf{x}| \mathbf{z}^{(l)})$ and $\nabla_{\mathbf{x}} \log q_{\phi}(\mathbf{z}^{(l)}|\mathbf{x})$ can be computed using automatic differentiation tools like Autograd in PyTorch. To solve the minimization in \eqref{fisher_minimization}, we use stochastic gradient descent (SGD) with minibatch data of size $N$ as in \cite{KingmaVAE}. Details of the optimization are given by Algorithm \ref{SGD}.  
\begin{algorithm}[h!]
	\caption{Training the Fisher AE with SGD} 
	 \label{SGD}
	\begin{algorithmic}[1]
	\STATE \textbf{Initialize} $\phi$, $\eta$ and $\theta$ \\ 
    \STATE \textbf{Repeat}: \\ 
    \STATE \quad Randomly sample a minibatch of training data $\{\mathbf{x}_i\}_{i=1}^N$ 
    \STATE \quad Compute gradient
     $\nabla_{\phi, \eta, \theta} \frac{1}{N} \sum_{i=1}^N \mathcal{L}^{(L)}_{\text{F-AE}} \left(\mathbf{x}_i; \phi, \eta, \theta\right)$ \\
    \STATE \quad Update $\phi$, $\eta$ and $\theta$ with Adam \cite{adam}
    \STATE \textbf{Until} convergence
    \STATE \textbf{Output:} $\phi_{*}$, $\eta_*$ and $\theta_*$
	\end{algorithmic} 
\end{algorithm}

\subsection{Fisher AE with exponential family priors}
As discussed earlier, employing the Fisher divergence has the advantage of dealing with probability distributions that are known up to some multiplicative constant. This powerful property allows to consider a rich family of distributions to model the prior $p(\mathbf{z})$. In this paper, we consider the use of exponential family whose general form is given by:
\begin{equation}
    \label{exp_family_general}
    p_{\eta}(\mathbf{z})  \propto \exp \left( \eta^{\top}T(\mathbf{z}) + h(\mathbf{z})\right), 
\end{equation}
where $\eta$ denotes the natural parameters, $h(\mathbf{z})$ is the carrier measure and
$T(\mathbf{z})$ is referred to as a sufficient statistic \cite{exp_family}. Popular examples of the exponential family include the Bernoulli, Poisson and Gaussian distributions to name a few \cite{exp_family}.
Note that the form given by the right hand side of \eqref{exp_family_general} is not a valid PDF since it does not sum to 1, but it is sufficient to compute the gradient of the log-density w.r.t $\mathbf{z}$ which is given by $\nabla_{\mathbf{z}} \log p_{\eta}(\mathbf{z})  = \nabla_{\mathbf{z}} \left( \eta^{\top}T(\mathbf{z}) + h(\mathbf{z})\right)$. Therefore, the term \textcircled{1} in \eqref{fisher_loss} can be written as:
\begin{align*}
&  \mathbb{D}_{\nabla} \left[
 q_{\phi}(\mathbf{z}|\mathbf{x}) || p_{\theta}(\mathbf{z}|\mathbf{x})\right] \\
 & = \frac{1}{2} \int q_{\phi}(\mathbf{z}|\mathbf{x}) \|\nabla_{\mathbf{z}} \log q_{\phi}(\mathbf{z}|\mathbf{x}) - \nabla_{\mathbf{z}} \left( \eta^{\top}T(\mathbf{z}) + h(\mathbf{z})\right)  \\ & \quad \quad \quad \quad \quad \quad \quad - \nabla_{\mathbf{z}} \log p_{\theta}(\mathbf{x}| \mathbf{z}) \|^2 d \mathbf{z}. 
\end{align*}
which can be approximated using samples $\mathbf{z}^{(l)} \sim q_{\phi}(\mathbf{z}|\mathbf{x}) $, $l=1, \cdots, L$ as follows: 
\begin{align*}
 & \mathbb{D}_{\nabla} \left[
 q_{\phi}(\mathbf{z}|\mathbf{x}) || p_{\theta}(\mathbf{z}|\mathbf{x})\right] \\ 
 & \simeq  \frac{1}{2L} \sum_{l=1}^L    \|\nabla_{\mathbf{z}} \log q_{\phi}(\mathbf{z}^{(l)}|\mathbf{x}) - \nabla_{\mathbf{z}} \left( \eta^{\top}T(\mathbf{z}^{(l)}) + h(\mathbf{z}^{(l)})\right)  \\ & \quad \quad \quad \quad \quad \quad - \nabla_{\mathbf{z}} \log p_{\theta}(\mathbf{x}| \mathbf{z}^{(l)}) \|^2. 
\end{align*}
A popular class of distributions that belongs to the exponential family is given by the factorable polynomial exponential family (FPE) \cite{polynomial_family} in which $p_{\eta}(\mathbf{z})$ is given by 
\begin{equation}
    \label{FPE}
    p_{\eta}(\mathbf{z}) = p_{\eta}(z_1, \cdots, z_d) \propto \exp \left( \sum_{j=1}^d \sum_{k=1}^K \eta_{jk} z_j^k \right),
\end{equation}
where $K$ denotes the order of FPE family and $\{\eta_{jk}\}_{1 \leq j \leq d, 1 \leq k \leq K}$ is a set of parameters. The natural parameters, the sufficient statistic and the carrier measure in this case are given by:
\begin{align*}
    \eta & = \left[\eta_{11}, \eta_{12}, \cdots, \eta_{1K}, \cdots, \eta_{d1}, \eta_{d2}, \cdots, \eta_{dK}  \right]^{\top} \\ 
   T(\mathbf{z}) & = \left[z_1, z_1^2, \cdots, z_1^K, \cdots, z_d, z_d^2, \cdots, z_d^K \right]^{\top} \\
    h(\mathbf{z}) & = 0.
\end{align*}
With the model in \eqref{FPE}, the gradient of $\log p_{\eta}(\mathbf{z})$ w.r.t $\mathbf{z}$ can be easily derived as 
\begin{align*}
\frac{\partial}{\partial z_j} \log p_{\eta}(\mathbf{z}) = \sum_{k=1}^K k \eta_{jk} z_j^{k-1}, \quad j=1, \cdots, d.
\end{align*}
\section{Experiments}
\label{experiments}
In this section, we provide both qualitative and quantitative results that demonstrate the ability of our proposed Fisher AE model to produce high quality samples on real-world image datasets such as MNIST and CelebA. We compare results with both regular VAEs \cite{KingmaVAE} and Wasserstein Auto-Encoders with GAN penalty (WAE-GAN) \cite{wae}. In the supplementary material, we provide full details for the encoder/decoder architectures used by the different schemes for both MNIST and celebA datasets.  
\subsection*{Setup}
For optimization, we use Adam \cite{adam} with a learning rate $\texttt{lr}=2.10^{-4}$, $\beta_1=0.5$, $\beta_2=0.999$, a mini-batch size of $128$ and trained various models for 100 epochs. For all experiments, we pick $d=8$ for MNIST and $d=64$ for celebA and use Gaussian and Bernoulli decoders for Fisher AE and regular VAE respectively. As proposed earlier, we use exponential family priors for the Fisher AE as in \eqref{FPE} and noticed that $K=5$ seems to work better in all experiments whereas Gaussian priors are used for VAE and WAE-GAN. We use Gaussian posteriors for both Fisher AE and VAE such that $q_{\phi}(\mathbf{z}| \mathbf{x}) = \mathcal{N}\left(\mathbf{z}; \mu_{\phi}(\mathbf{x}), \sigma_{\phi}(\mathbf{x})^2 \right)$ where $ \mu_{\phi}(.)$ and $\sigma_{\phi}(.)$ are determined by the encoder architecture for which details are postponed to the supplementary material.

\subsection*{Sampling with SVGD}
To sample from the exponential family prior after training, we use Stein Variational Gradient Descent (SVGD) \cite{svgd_general} . Let $M$ be the number of samples that we would like to sample from $p_{\eta_*}(\mathbf{z})$ denoted by $\left\{\mathbf{z}^*_i \right\}_{i=1}^M$. We start with $\left \{ \mathbf{z}_i\right \}_{i=1}^M \sim_{i.i.d} \mathcal{N}(0, \mathbf{I})$ and we keep evolving these samples with a step-size $10^{-3}$ for $15,000$ iterations. These parameters (step-size and number of iterations) seem to work reasonably well across all experiments. 
\subsection*{MNIST} 
\begin{figure}
      \begin{center}
    \includegraphics[scale=0.45]{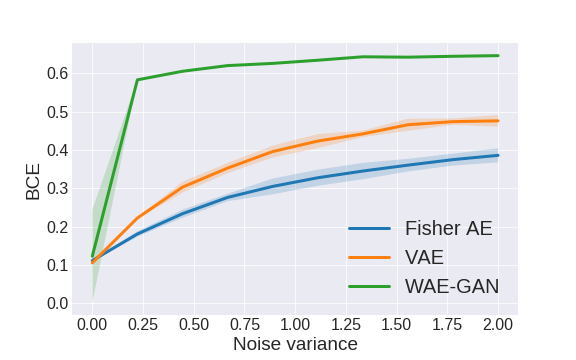}
  \end{center}
    \caption{BCE vs. noise variance $\sigma^2$ for MNIST.}
    \label{fig:BCE_noise_mnist}
\end{figure}
Figure \ref{fig:mnist_results_main} exhibits a comparison between the three auto-encoders in terms of robustness, test reconstruction, and random sampling. In order to compare the robustness, we plot the reconstructed samples of the different schemes when the test data is corrupted by an isotropic Gaussian noise with a covariance matrix $0.2 \times \mathbf{I}_D$. 
The results of this experiment are given by the first row of Figure \ref{fig:mnist_results_main}.
Clearly, WAE-GAN completely fail to reconstruct the test data and Fisher AE seems to be more robust to noise. This result is confirmed quantitatively in Figure \ref{fig:BCE_noise_mnist} where we plot the normalized binary cross-entropy (BCE)  w.r.t the noise variance added to the test data, i.e. we feed the different trained models with $\texttt{data} = \texttt{test data} + \mathcal{N}(0, \sigma^2 \mathbf{I}_D)$ and compute the BCE reconstruction loss w.r.t the true test data.  In the second and third rows of Figure \ref{fig:mnist_results_main}, we show both the reconstruction and generative performance of the different auto-encoders. For both test reconstruction and random sampling, the proposed Fisher AE exhibits a comparable performance to WAE-GAN which achieves the best generative performance thanks to the GAN penalty in the loss function \cite{wae}. 

We further examine the robustness of the different models w.r.t latent representation when the data is corrupted by additive Gaussian noise. In Figure \ref{fig:robust_cluster_fisher} using non-linear dimensionality reduction techniques such as t-SNE, we visualize the 2D latent structure of the different models. As shown in Figure \ref{fig:robust_cluster_fisher}, even with corrupted data, the latent structure of the Fisher AE is still preserved and the clusters associated to different classes are relatively distinguishable. This is not the case for VAE and WAE-GAN where the clusters in the latent space are somewhat mixed up when the data is perturbed by noise. This behavior is quantitatively confirmed in Figure \ref{fig:mutual_information} where test data is perturbed with Gaussian noise with variance $\sigma^2$, then encoded with each model encoder and projected with t-SNE and finally clustered using k-means. The qualtiy of clustering is measured using the normalized mutual information: $\text{NMI}= \frac{2 \mathcal{I}\left(\mathbb{\Omega}; \mathbb{C} \right)}{\mathcal{H}(\mathbb{\Omega}) + \mathcal{H}(\mathbb{C})}$, where $\mathbb{\Omega}$ is the model clusters for a given noise level, $\mathbb{C}$ is the true class labels, $\mathcal{I}(.;.)$ denotes the mutual information and $\mathcal{H}(.)$ denotes the entropy. Clearly, Fisher AE exhibits a better behavior in terms of clustering robustness where the decay in performance is nearly linear whereas for both VAE and WAE-GAN, the performance decays faster. 
\begin{figure}
    \centering
     \includegraphics[scale=0.36]{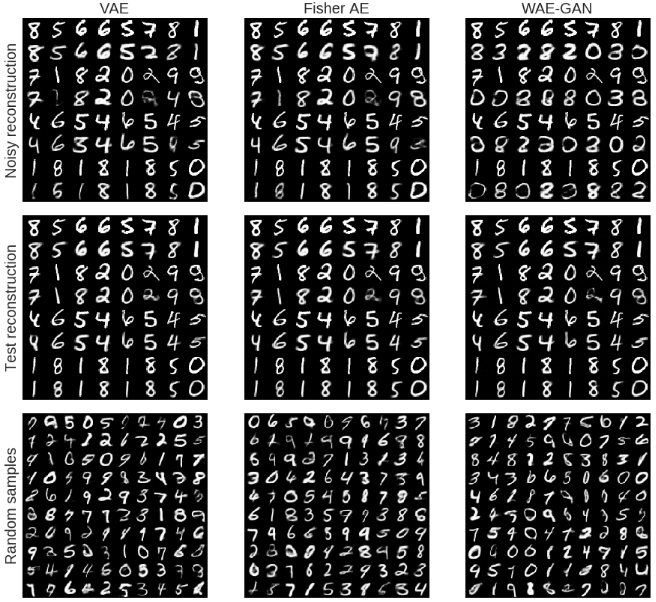}
    \caption{Performance of the Fisher AE trained on MNIST dataset in comparison with VAE and WAE-GAN. True test data are given by the odd rows in both reconstruction tasks (rows 1 and 2). }
    \label{fig:mnist_results_main}
\end{figure}
\begin{figure}[h!]
    \centering
     \includegraphics[scale=0.56]{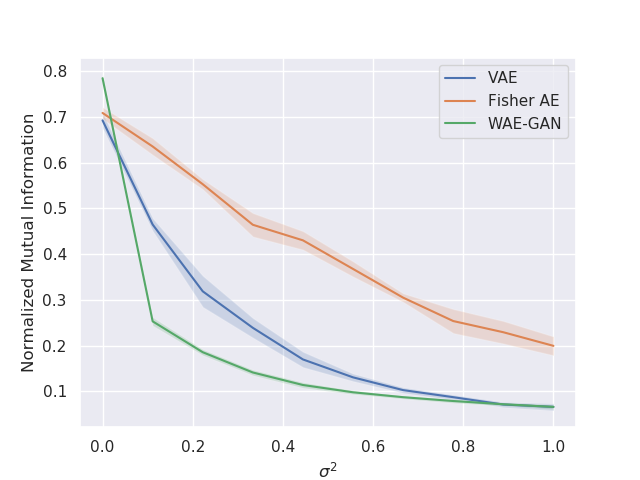}
    \caption{Robustness of latent space clustering in terms of the normalized mutual information on MNIST test set.}
    \label{fig:mutual_information}
\end{figure}
\begin{figure*}
    \centering
     \includegraphics[scale=0.86]{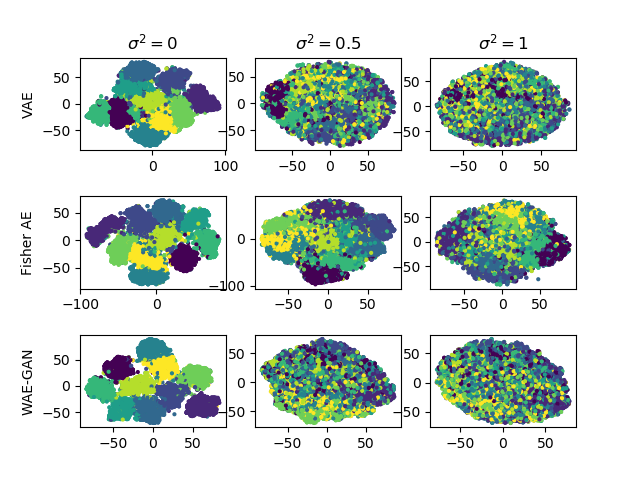}
    \caption{Visualisation of models latent representation on MNIST test set using t-SNE for different noise levels.}
    \label{fig:robust_cluster_fisher}
\end{figure*}
\subsection*{CelebA}
\begin{figure}[h]
    \includegraphics[scale=0.42]{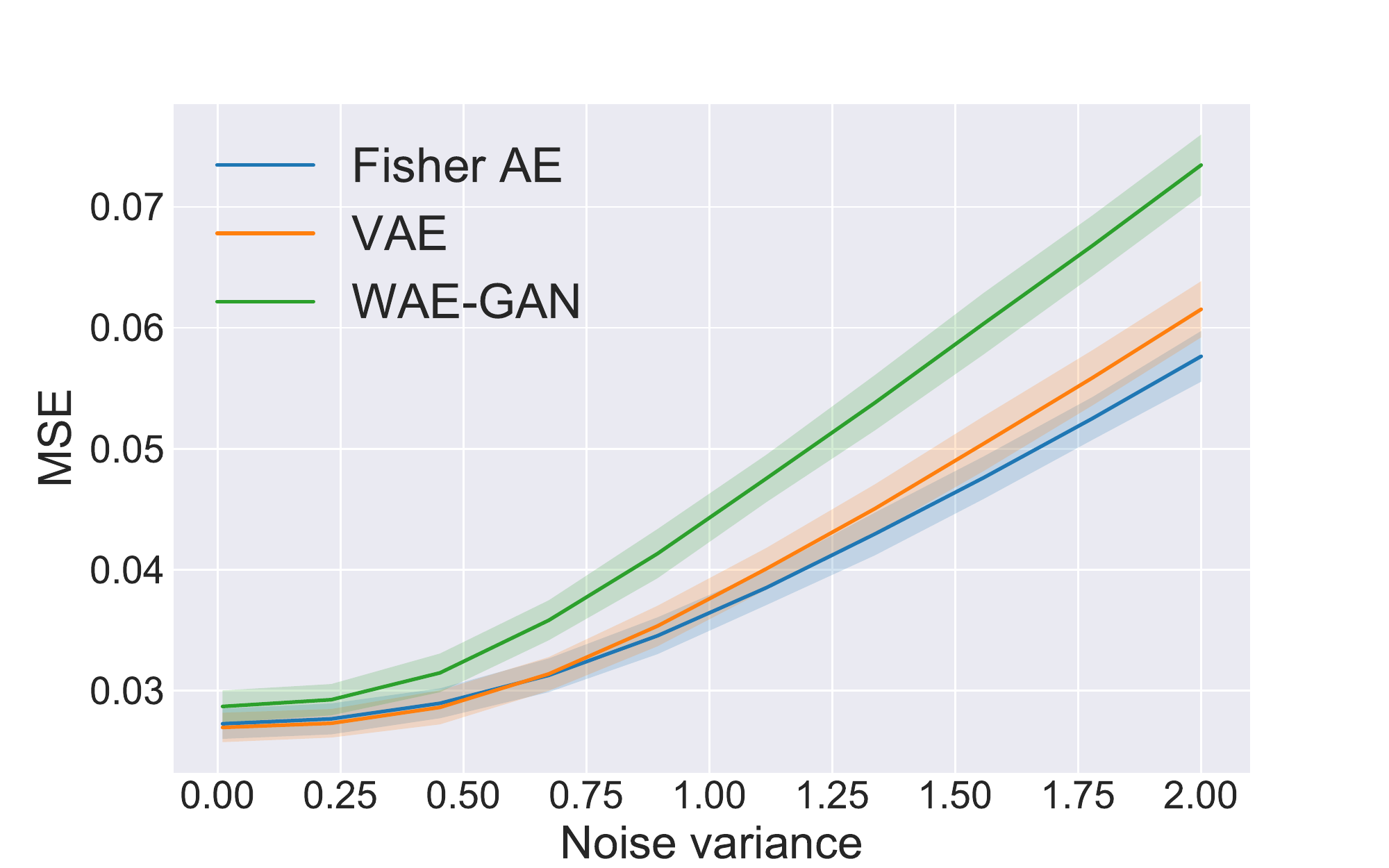}
    \caption{MSE vs. noise variance $\sigma^2$ for celebA. Errors are computed from variances in batches in the test set.}
    \label{fig:MSE_noise_celeba}
\end{figure}
\begin{table}
\caption{FID scores of the different generative models trained on CelebA (smaller is better).}
\begin{tabular}{ll}
\hline
\textbf{Algorithm}                                                                & \textbf{FID score}                                                     \\ \hline
\begin{tabular}[c]{@{}l@{}}VAE\\ Fisher AE (Gaussian prior)\\
\textbf{Fisher AE (Exp. prior)} \\ 
WAE-GAN\end{tabular} & \begin{tabular}[c]{@{}l@{}}89.1 $\pm$ 1.1 \\ 89.1 $\pm$ 0.9 \\ \textbf{84.7 $\pm$ 0.8} \\  75.2 $\pm$ 1.0 \end{tabular} \\ \hline
\end{tabular}
\label{FID_scores}
\end{table}

For the CelebA dataset, it is clear from the first row (the noisy reconstructions) of Figure \ref{fig:agg_celeba_2} that the proposed Fisher AE is more robust than both VAE and WAE-GAN when the test data is corrupted with an isotropic Gaussian noise with covariance matrix $2 \mathbf{I}_D$. We further validate this property with different noise levels as depicted in Figure \ref{fig:MSE_noise_celeba} where the Fisher AE outperforms VAE and WAE-GAN in the reconstruction MSE. 
Moreover, as shown in Figure \ref{fig:agg_celeba_2}, the Fisher AE generates better samples than VAE and has  comparable quality to WAE. 
The visual quality of the samples is confirmed by the quantitative results summarized in Table \ref{FID_scores} where the proposed Fisher AE with exponential family priors outperforms VAE in terms of the Fr\'echet Inception Distance (FID) and has  relatively worse performance than WAE. Furthermore, sampling using the exponential prior provides additional challenges due to the difficulty of convergence of the algorithm. This may be alleviated with alternative sampling algorithms, but that remains beyond the scope of this paper.

\begin{figure*}[h!]
    \centering
     \includegraphics[scale=0.35]{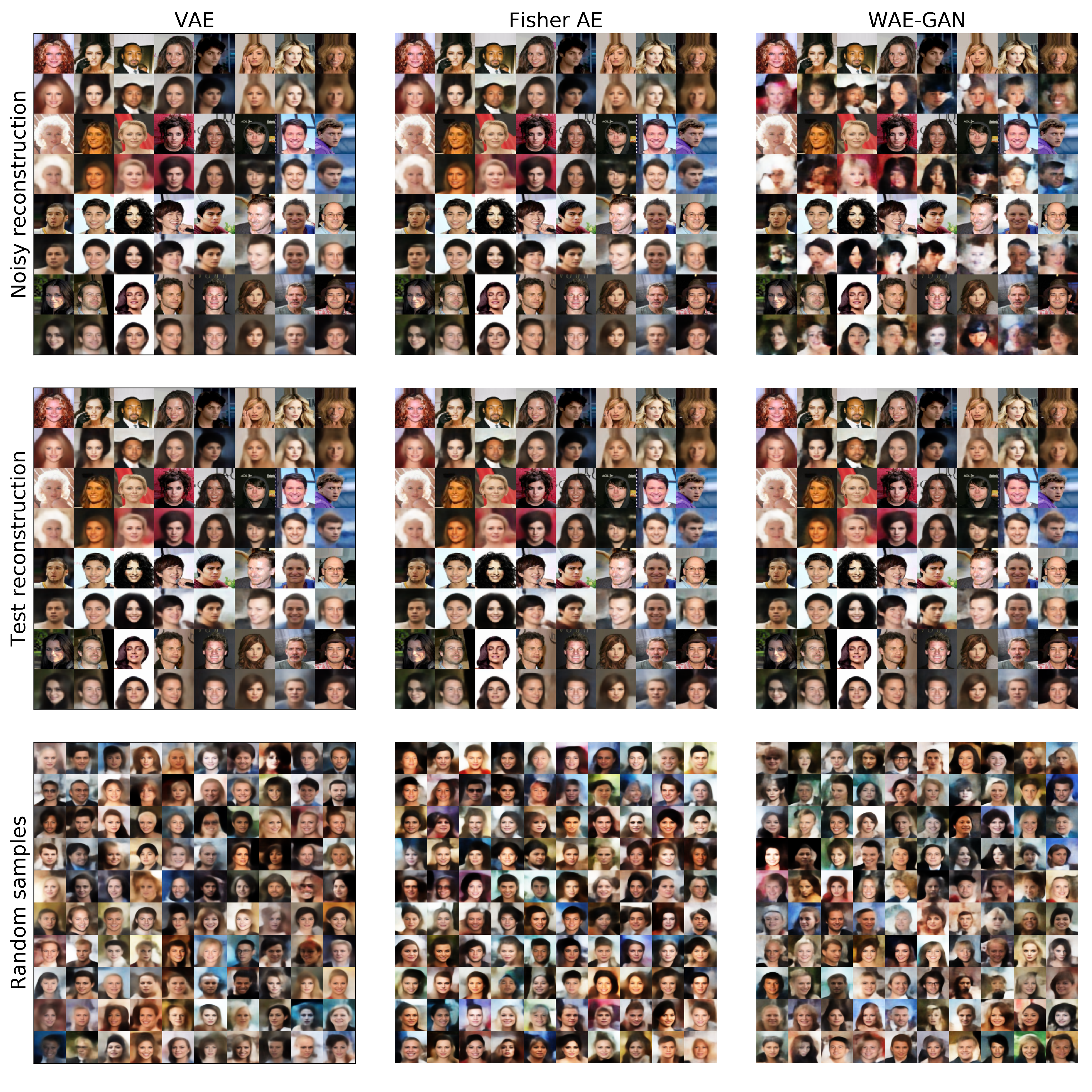}
    \caption{Performance of the Fisher AE trained on celebA dataset in comparison with VAE and WAE-GAN. True test data are given by the odd rows in the reconstruction tasks (rows 1 and 2).}
    \label{fig:agg_celeba_2}
\end{figure*}
\section{Conclusion}
\label{conclusion}
In this paper, we introduced a new type of auto-encoders constructed based on the minimization of the Fisher divergence between the joint distribution over the data and latent variables and the model joint distribution. The resulting loss function has two interesting aspects: 1) it allows to directly minimize the tractable Fisher divergence between the approximate and the true posteriors and 2) considers a stability measure of the encoder that allows to produce robust features. Experimental results were provided to demonstrate the competitive performance of the proposed Fisher auto-encoders as compared to some existing schemes like VAEs and Wasserstein AEs and their superiority in terms of robustness. An interesting but non trivial extension of the present work is to consider the modeling of the posterior distribution using exponential family priors.

\subsubsection*{Acknowledgements}
This work was supported by the Army Research Office grant No. W911NF-15-1-0479.

\bibliographystyle{apa}
\bibliography{References}
\onecolumn
\pagebreak
\section*{SUPPLEMENTARY MATERIAL}
\section*{Proof of Theorem 1}
\begin{align*}
    \phi^{\star}, \eta^{\star}, \theta^{\star} & = \arg \min_{\phi, \eta, \theta} \mathbb{D}_{\nabla}    \left[q_{\star, \phi}(\mathbf{x},\mathbf{z}) || p_{\theta}(\mathbf{x}, \mathbf{z}) \right] \\ 
 & =   \arg \min_{\phi, \eta, \theta} \mathbb{E}_{q_{\star, \phi} (\mathbf{x}, \mathbf{z})}  \frac{1}{2} \left \| \nabla_{\mathbf{x}, \mathbf{z}} \log q_{\star, \phi}(\mathbf{x},\mathbf{z}) - \nabla_{\mathbf{x}, \mathbf{z}} \log p_{\eta, \theta}(\mathbf{x},\mathbf{z})\right\|^2 \\
 & = \arg \min_{\phi, \eta, \theta} \mathbb{E}_{p_{\star}(\mathbf{x})}  \mathbb{E}_{q_{\phi}(\mathbf{z}| \mathbf{x})} \frac{1}{2} \left \| \nabla_{\mathbf{x}, \mathbf{z}} \log q_{\star, \phi}(\mathbf{x},\mathbf{z}) - \nabla_{\mathbf{x}, \mathbf{z}} \log p_{\eta, \theta}(\mathbf{x},\mathbf{z})\right\|^2 \\ 
 & = \arg \min_{\phi, \eta, \theta}  \mathbb{E}_{p_{\star}(\mathbf{x})}  \mathbb{E}_{q_{\phi}(\mathbf{z}| \mathbf{x})} \frac{1}{2} \left \| \nabla_{\mathbf{x}} \log p_{\star}(\mathbf{x}) + \nabla_{\mathbf{x}} \log q_{\phi}(\mathbf{z} | \mathbf{x}) - \nabla_{\mathbf{x}} \log p_{ \theta}(\mathbf{x} |\mathbf{z})\right\|^2 \\
 & \quad \quad \quad \quad \quad  + \mathbb{E}_{p_{\star}(\mathbf{x})}  \underbrace{\mathbb{E}_{q_{\phi}(\mathbf{z}| \mathbf{x})} \frac{1}{2} \left \| \nabla_{\mathbf{z}} \log q_{\phi}(\mathbf{z} | \mathbf{x}) - \nabla_{\mathbf{z}} \log p_{\eta, \theta}(\mathbf{z} | \mathbf{x})\right \|^2}_{\mathbb{D}_{\nabla} \left[q_{\phi}(\mathbf{z}| \mathbf{x}) || p_{\eta, \theta}(\mathbf{z} | \mathbf{x}) \right]}  \\
 & = \arg \min_{\phi, \eta, \theta} \mathbb{E}_{p_{\star}(\mathbf{x})} \mathbb{D}_{\nabla} \left[q_{\phi}(\mathbf{z}| \mathbf{x}) || p_{\eta, \theta}(\mathbf{z} | \mathbf{x}) \right] + \mathbb{E}_{p_{\star}(\mathbf{x})}  \mathbb{E}_{q_{\phi}(\mathbf{z}| \mathbf{x})} \frac{1}{2} \left \|\nabla_{\mathbf{x}} \log p_{\star}(\mathbf{x}) \right\|^2  \\
 & \quad \quad \quad \quad \quad + \mathbb{E}_{p_{\star}(\mathbf{x})}  \mathbb{E}_{q_{\phi}(\mathbf{z}| \mathbf{x})} \nabla_{\mathbf{x}} \log p_{\star}(\mathbf{x})^{\top} \nabla_{\mathbf{x}} \log q_{\phi}(\mathbf{z} | \mathbf{x}) \\ 
 & \quad \quad \quad \quad \quad  + \mathbb{E}_{p_{\star}(\mathbf{x})} \mathbb{E}_{q_{\phi}(\mathbf{z}| \mathbf{x})} \frac{1}{2} \left \|\nabla_{\mathbf{x}} \log q_{\phi}(\mathbf{z} | \mathbf{x}) \right\|^2 - \nabla_{\mathbf{x}} \log p_{\theta}(\mathbf{x} | \mathbf{z})^{\top} \nabla_{\mathbf{x}} \log q_{\phi}(\mathbf{z} | \mathbf{x}) \\ 
& \quad \quad \quad \quad \quad  + \mathbb{E}_{p_{\star}(\mathbf{x})} \mathbb{E}_{q_{\phi}(\mathbf{z}| \mathbf{x})} \frac{1}{2}\left \|\nabla_{\mathbf{x}} \log p_{\theta}(\mathbf{x} | \mathbf{z}) \right\|^2 - \nabla_{\mathbf{x}} \log p_{\theta}(\mathbf{x} | \mathbf{z})^{\top} \nabla_{\mathbf{x}} \log p_{\star}(\mathbf{x}) \\ 
\end{align*}
Let's examine the inner-product terms:
\begin{align*}
&  \mathbb{E}_{p_{\star}(\mathbf{x})} \mathbb{E}_{q_{\phi}(\mathbf{z}| \mathbf{x})}   \nabla_{\mathbf{x}} \log p_{\star}(\mathbf{x})^{\top} \nabla_{\mathbf{x}} \log q_{\phi}(\mathbf{z} | \mathbf{x}) \\ & = \iint p_{\star}(\mathbf{x}) q_{\phi}(\mathbf{z}| \mathbf{x}) \nabla_{\mathbf{x}} \log p_{\star}(\mathbf{x})^{\top} \nabla_{\mathbf{x}} \log q_{\phi}(\mathbf{z} | \mathbf{x}) d\mathbf{z} d\mathbf{x} \\
 & = \iint p_{\star}(\mathbf{x}) q_{\phi}(\mathbf{z}| \mathbf{x}) \nabla_{\mathbf{x}} \log p_{\star}(\mathbf{x})^{\top} \nabla_{\mathbf{x}} \log q_{\phi}(\mathbf{z} | \mathbf{x}) d\mathbf{x} d\mathbf{z} \\
 & = \iint p_{\star}(\mathbf{x}) \nabla_{\mathbf{x}} \log p_{\star}(\mathbf{x})^{\top} \nabla_{\mathbf{x}} q_{\phi}(\mathbf{z} | \mathbf{x}) d\mathbf{x} d\mathbf{z} \\ 
 & \stackrel{(a)}{=} - \iint p_{\star}(\mathbf{x}) q_{\phi}(\mathbf{z}| \mathbf{x}) \left[ \left \| \nabla_{\mathbf{x}} \log p_{\star}(\mathbf{x})\right\|^2+ \Delta_{\mathbf{x}} \log p_{\star}(\mathbf{x})  \right] d\mathbf{x} d\mathbf{z} \\
 & = -  \mathbb{E}_{p_{\star}(\mathbf{x})} \mathbb{E}_{q_{\phi}(\mathbf{z}| \mathbf{x})} \left[ \left \| \nabla_{\mathbf{x}} \log p_{\star}(\mathbf{x})\right\|^2+ \Delta_{\mathbf{x}} \log p_{\star}(\mathbf{x})  \right],
\end{align*}
where $(a)$ is obtained by an integration by parts. 
\begin{align*}
  & \mathbb{E}_{p_{\star}(\mathbf{x})} \mathbb{E}_{q_{\phi}(\mathbf{z}| \mathbf{x})}    -  \nabla_{\mathbf{x}} \log p_{\theta}(\mathbf{x} | \mathbf{z})^{\top} \nabla_{\mathbf{x}} \log q_{\phi}(\mathbf{z} | \mathbf{x}) \\ & = - \iint p_{\star}(\mathbf{x}) q_{\phi}(\mathbf{z}| \mathbf{x}) \nabla_{\mathbf{x}} \log p_{\theta}(\mathbf{x} | \mathbf{z})^{\top} \nabla_{\mathbf{x}} \log q_{\phi}(\mathbf{z} | \mathbf{x}) d\mathbf{z} d\mathbf{x} \\
  & = - \iint   p_{\star}(\mathbf{x}) \nabla_{\mathbf{x}} \log p_{\theta}(\mathbf{x} | \mathbf{z})^{\top} \nabla_{\mathbf{x}}  q_{\phi}(\mathbf{z} | \mathbf{x}) d\mathbf{x} d\mathbf{z} \\
  & \stackrel{(b)}{=}  \iint p_{\star}(\mathbf{x}) q_{\phi}(\mathbf{z}| \mathbf{x}) \left[ \Delta_{\mathbf{x}} \log p_{\theta}(\mathbf{x} | \mathbf{z}) + \nabla_{\mathbf{x}} \log p_{\theta}(\mathbf{x} | \mathbf{z})^{\top} \nabla_{\mathbf{x}} \log p_{\star}(\mathbf{x})  \right]  d\mathbf{x} d\mathbf{z} \\
  & = \mathbb{E}_{p_{\star}(\mathbf{x})} \mathbb{E}_{q_{\phi}(\mathbf{z}| \mathbf{x})}  \left[ \Delta_{\mathbf{x}} \log p_{\theta}(\mathbf{x} | \mathbf{z}) + \nabla_{\mathbf{x}} \log p_{\theta}(\mathbf{x} | \mathbf{z})^{\top} \nabla_{\mathbf{x}} \log p_{\star}(\mathbf{x})  \right],
\end{align*}
where $(b)$ is again obtained by an integration by parts. 
Grouping all the terms together, we get 
\begin{align*}
 & \phi^{\star}, \eta^{\star}, \theta^{\star}  \\ & = \arg \min_{\phi, \eta, \theta} \mathbb{E}_{p_{\star}(\mathbf{x})} - s_{\nabla}\left[p_{\star}(\mathbf{x})\right] +  \mathbb{D}_{\nabla} \left[q_{\phi}(\mathbf{z}| \mathbf{x}) || p_{\eta, \theta}(\mathbf{z} | \mathbf{x}) \right]  + \mathbb{E}_{q_{\phi}(\mathbf{z}| \mathbf{x})} s_{\nabla}\left[p_{\theta}(\mathbf{x} | \mathbf{z})\right]  \\ 
 & \quad \quad \quad \quad \quad \quad \quad + \frac{1}{2} \left \|\nabla_{\mathbf{x}} \log q_{\phi}(\mathbf{z} | \mathbf{x}) \right\|^2. 
\end{align*}
By noticing that $\mathbb{E}_{p_{\star}(\mathbf{x})}-s_{\nabla}\left[p_{\star}(\mathbf{x})\right]$ is independent of the parameters $\phi$, $\eta$ and $\theta$, we conclude the proof of Theorem 1. 
\section*{Robustness to binary masking noise}
We extend the experiments to examine the robustness of the proposed Fisher AEs and consider another type of noise called binary masking noise which consists on setting the value of a randomly selected fraction $\nu$ of input components to zero. 
\subsection*{MNIST}
\begin{figure}[h!]
      \begin{center}
    \includegraphics[scale=0.44]{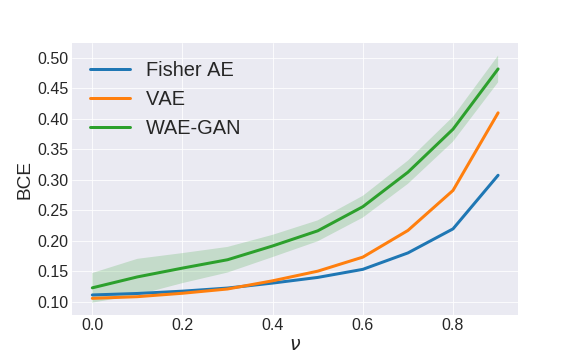}
  \end{center}
    \caption{BCE vs. the fraction $\nu$ for MNIST.}
    \label{fig:BCE_mask_mnist}
\end{figure}
\begin{figure}[h]
    \centering
     \includegraphics[scale=0.47]{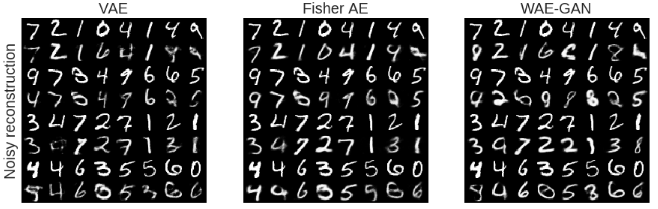}
    \caption{Test reconstruction results when a random fraction $\nu=0.8$ of test data is set to zero. True test data are given by the odd rows.}
    \label{fig:mnist_mask_samples}
\end{figure}
The same insights regarding the robustness of the Fisher AE to Gaussian noise are confirmed in the case of binary masking noise. Both Figures \ref{fig:BCE_mask_mnist} and \ref{fig:mnist_mask_samples} shows the superiority of Fisher AE in terms of robustness to binary masking noise as compared to VAE and WAE-GAN. 
\subsection*{CelebA}
\begin{figure}[h!]
      \begin{center}
    \includegraphics[scale=0.44]{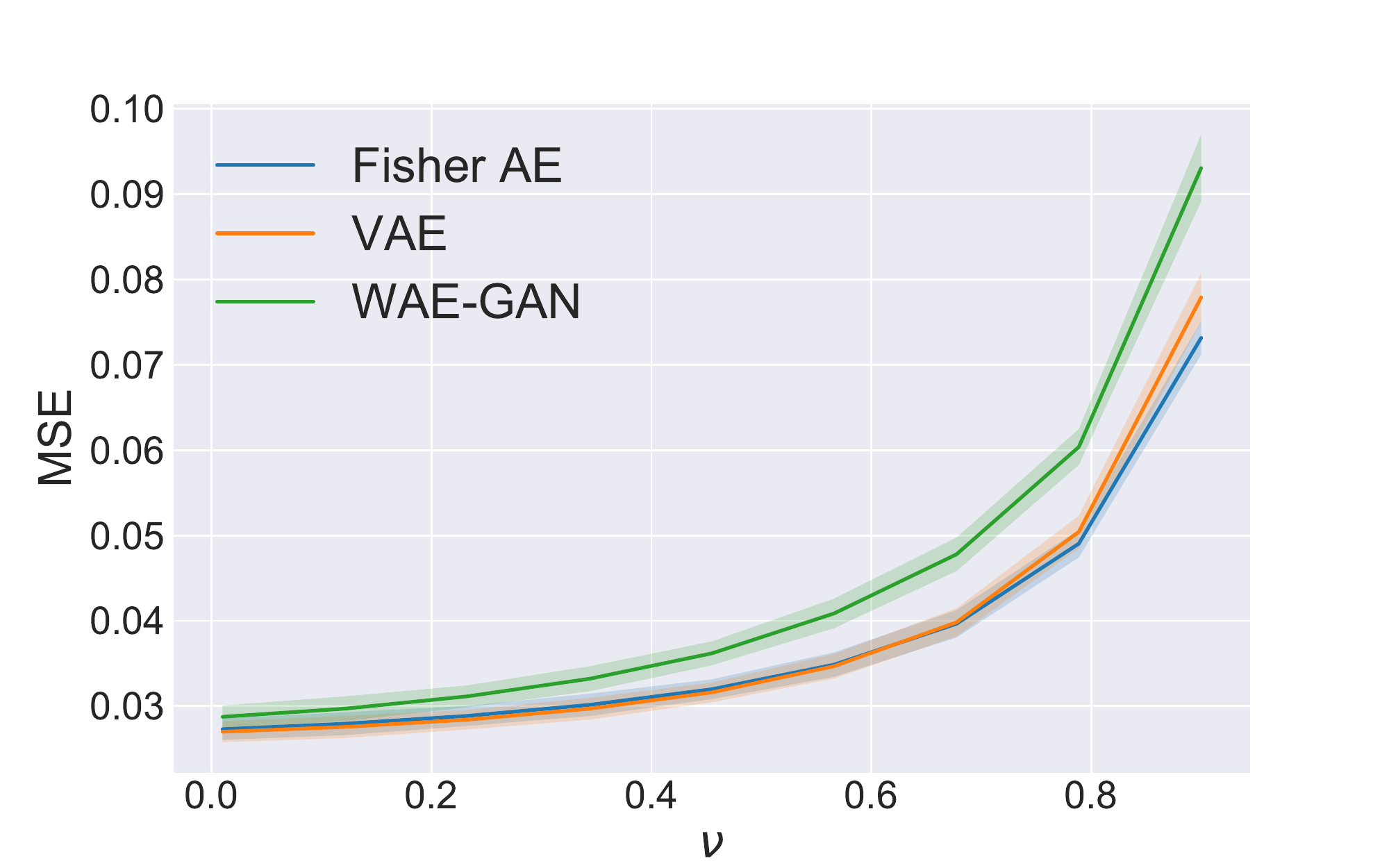}
  \end{center}
    \caption{MSE vs. the fraction $\nu$ for celebA with random mask.}
    \label{fig:celeba_mask}
\end{figure}
\begin{figure}[h]
    \centering
     \includegraphics[scale=0.33]{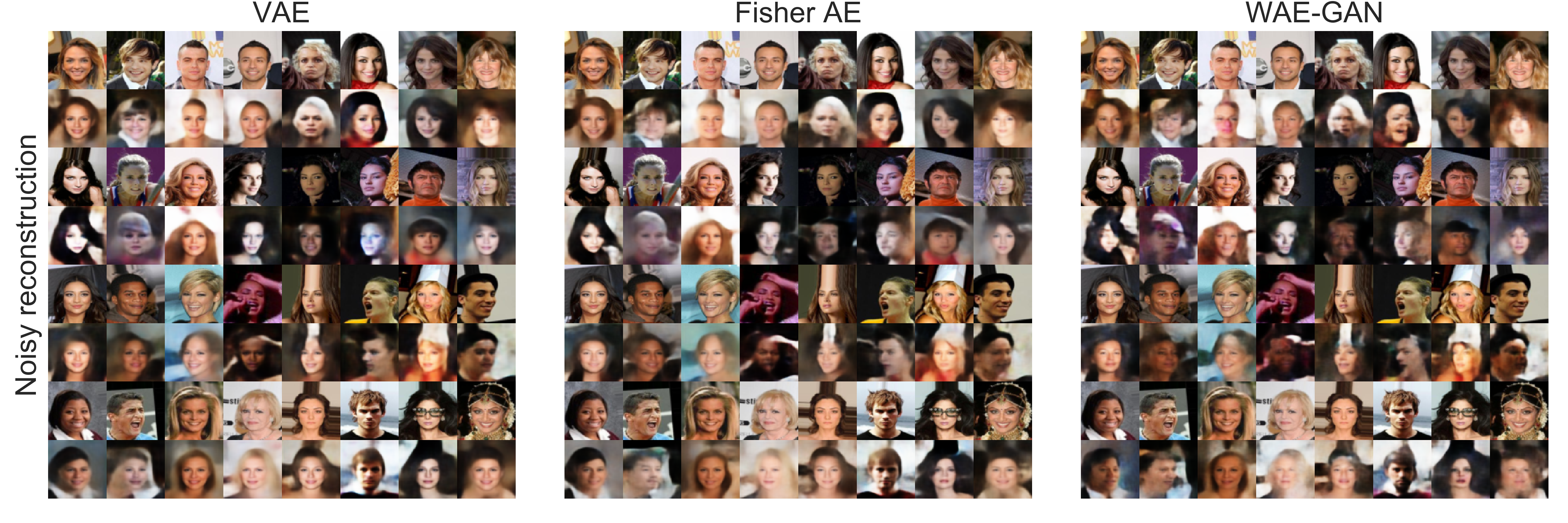}
    \caption{Test reconstruction results when a random fraction $\nu=0.8$ of test data is set to zero. True test data are given by the odd rows.}
    \label{fig:celeba_mask_samples}
\end{figure}
As shown in  Figures \ref{fig:celeba_mask} and \ref{fig:celeba_mask_samples}, in the case of CelebA, both VAE and Fisher AE exhibit similar but superior performance in terms of robustness against binary masking noise as compared to WAE-GAN. 
\section*{Further details on experiments}
Here, we give the detailed architecture used in the implementation of the different auto-encoders for both MNIST and celebA data sets. 
\begin{itemize}
\item $\text{FC}(n_{in}, n_{out})$: Fully connected layer with input/output dimensions given by $n_{in}$ and $n_{out}$. 
\item $\text{Conv}(n_{in}, n_{out}, k, s, p)$: Convolutional layer with input channels $n_{in}$, output channels $n_{out}$, kernel size $k$, stride $s$ and padding $p$.
\item $\text{ConvT}(n_{in}, n_{out}, k, s, p)$: Transposed convolutional layer with input channels $n_{in}$, output channels $n_{out}$, kernel size $k$, stride $s$ and padding $p$. 
\item $\text{AvgPool}(k, s, p)$: Average Pooling with kernel size, stride and padding respectively given by $k$, $s$ and $p$. 
\item BN : Batch-normalization
\item BiI : 2D bilinear interpolation layer 
\end{itemize}
\pagebreak
\subsection*{MNIST}
\begin{table}[th!]
\caption{Encoder/Decoder architectures for Fisher AE and VAE for MNIST}
\centering
\begin{tabular}{|l|l|}
\hline
\textbf{Encoder}                                                                                                                                                                                                                          & \textbf{Decoder}                                                                                                                                                                                                                       \\ \hline \hline
\begin{tabular}[c]{@{}l@{}}Input size: (1, 28, 28)\\ Conv(1, 64, 3, 2, 2) \\ LeakyReLU\\ Conv(64, 128, 3, 2, 2) \\ BN LeakyReLU\\ Conv(128, 256, 3, 2, 2) \\ BN, LeakyReLU\\ Conv(256, 512, 3, 2, 1)\\ BN, LeakyReLU \\ Conv(512, 16, 3, 2, 0) \\ Output size : (16, 1, 1)\end{tabular} & \begin{tabular}[c]{@{}l@{}}Input size: (8,  1,  1)\\ ConvT(8, 512, 5, 1, 1) \\ BN, ReLU \\ ConvT(512, 256, 5, 1, 1) \\ BN, ReLU\\ ConvT(256, 128, 5, 2, 1) \\ BN, ReLU\\ ConvT(128, 64, 5, 1, 1) \\ BN, ReLU\\ ConvT(64, 1, 4, 2, 0) \\ Sigmoid \\ Output size : (1, 28, 28) \end{tabular} \\ \hline
\end{tabular}
\end{table}
\begin{table}[h!]
\caption{Encoder/Generator/Discriminator architectures for WAE-GAN for MNIST}
\centering
\begin{tabular}{|l|l|l|}
\hline
\textbf{Encoder}                                                                                                                                                                                                                                                                         & \textbf{Generator}                                                                                                                                                                                                                                                                    & \textbf{Discriminator}                                                                                                                                 \\ \hline \hline 
\begin{tabular}[c]{@{}l@{}}Input size : (1, 28, 28)\\ Conv(1, 64, 3, 2, 2) \\ LeakyReLU\\ Conv(64, 128, 3, 2, 2) \\ BN, LeakyReLU\\ Conv(128, 256, 3, 2, 2) \\ BN, LeakyReLU\\ Conv(256, 512, 3, 2, 1) \\ BN, LeakyReLU\\ Conv(512, 8, 3, 1, 0) \\ Output size : (8, 1, 1)\end{tabular} & \begin{tabular}[c]{@{}l@{}}Input size : (8, 1, 1)\\ ConvT(8, 512, 5, 1, 1) \\ BN, ReLU\\ ConvT(512, 256, 5, 1, 1) \\ BN, ReLU\\ ConvT(256, 128, 5, 2, 1) \\ BN, ReLU\\ ConvT(128, 64, 5, 1, 1) \\ BN, ReLU\\ ConvT(64, 1, 4, 2, 0) \\ Sigmoid\\ Output size : (1, 28, 28)\end{tabular} & \begin{tabular}[c]{@{}l@{}}Input size : (8, 1, 1)\\ Flatten\\ FC(8, 256) \\ ReLU\\ FC(256, 1) \\ Sigmoid\\ \\ \\ Output size : (1, )\end{tabular} \\ \hline
\end{tabular}
\end{table}
\pagebreak
\subsection*{CelebA}
\begin{table}[h!]
\caption{Encoder/Decoder architectures for Fisher AE and VAE for CelebA}
\centering
\begin{tabular}{|l|l|}
\hline
\textbf{Encoder}                                                                                                                                                                                                                          & \textbf{Decoder}                                                                                                                                                                                                                       \\ \hline \hline
\begin{tabular}[c]{@{}l@{}}Input size: (3, 64, 64)\\ Conv(3, 64, 5, 1, 2) \\ LeakyReLU, AvgPool(2, 2, 0)\\ Conv(64, 128, 5, 1, 2) \\ BN, LeakyReLU, AvgPool(2, 2, 0)\\ Conv(128, 256, 5, 1, 2) \\ BN, LeakyReLU, AvgPool(2, 2, 0)\\ Conv(256, 512, 5, 1, 2) \\ BN, LeakyReLU\\ Flatten \\ FC(8192, 64) \\ Output size : (64, )\end{tabular} & \begin{tabular}[c]{@{}l@{}}Input size: (64,  1,  1)\\ BiI, Conv(64, 512, 5, 1, 02) \\ BN, ReLU\\ BiI, Conv(512, 256, 5, 1, 2) \\ BN, ReLU\\ BiI, Conv(256, 128, 5, 1, 2) \\ BN, ReLU\\ BiI, Conv(128, 64, 5, 1, 2) \\ BN, ReLU\\ BiI, Conv(64, 3, 5, 1, 2) \\ Tanh \\ Output size : (3, 64, 64) \end{tabular} \\ \hline
\end{tabular}
\end{table}

\begin{table}[h!]
\caption{Encoder/Generator/Discriminator architectures for WAE-GAN for CelebA}
\centering
\begin{tabular}{|l|l|l|}
\hline
\textbf{Encoder}                                                                                                                                                                                                                                                                         & \textbf{Generator}                                                                                                                                                                                                                                                                    & \textbf{Discriminator}                                                                                                                                 \\ \hline \hline 
\begin{tabular}[c]{@{}l@{}}Input size: (3, 64, 64)\\ Conv(3, 64, 5, 1, 2) \\ LeakyReLU, AvgPool(2, 2, 0)\\ Conv(64, 128, 5, 1, 2) \\ BN, LeakyReLU, AvgPool(2, 2, 0)\\ Conv(128, 256, 5, 1, 2) \\ BN, LeakyReLU, AvgPool(2, 2, 0)\\ Conv(256, 512, 5, 1, 2) \\ BN, LeakyReLU\\ Flatten \\ FC(8192, 64) \\ Output size : (64, )\end{tabular} & \begin{tabular}[c]{@{}l@{}}Input size: (64,  1,  1)\\ BiI, Conv(64, 512, 5, 1, 02) \\ BN, ReLU\\ BiI, Conv(512, 256, 5, 1, 2) \\ BN, ReLU\\ BiI, Conv(256, 128, 5, 1, 2) \\ BN, ReLU\\ BiI, Conv(128, 64, 5, 1, 2) \\ BN, ReLU\\ BiI, Conv(64, 3, 5, 1, 2) \\ Tanh \\ Output size : (3, 64, 64)\end{tabular} & \begin{tabular}[c]{@{}l@{}}Input size : (8, 1, 1)\\ Flatten\\ FC(8, 256) \\ ReLU\\ FC(256, 1) \\ Sigmoid\\ \\ \\ Output size : (1, )\end{tabular} \\ \hline
\end{tabular}
\end{table}


\end{document}